\documentclass[letterpaper]{article}

\usepackage{lmodern}

\usepackage[
  letterpaper,
  textheight=9in,
  textwidth=5.9in,
  top=1in,
  headheight=12pt,
  headsep=25pt,
  footskip=30pt
]{geometry}

\makeatletter
\renewcommand{\normalsize}{%
  \@setfontsize\normalsize\@xpt\@xipt
  \abovedisplayskip      7\p@ \@plus 2\p@ \@minus 5\p@
  \abovedisplayshortskip \z@ \@plus 3\p@
  \belowdisplayskip      \abovedisplayskip
  \belowdisplayshortskip 4\p@ \@plus 3\p@ \@minus 3\p@
}
\normalsize
\renewcommand{\small}{%
  \@setfontsize\small\@ixpt\@xpt
  \abovedisplayskip      6\p@ \@plus 1.5\p@ \@minus 4\p@
  \abovedisplayshortskip \z@  \@plus 2\p@
  \belowdisplayskip      \abovedisplayskip
  \belowdisplayshortskip 3\p@ \@plus 2\p@   \@minus 2\p@
}

\renewcommand{\tiny}{\fontsize{6pt}{7pt}\selectfont}
\renewcommand{\large}{\@setfontsize\large\@xiipt{14}}
\renewcommand{\Large}{\@setfontsize\Large\@xivpt{16}}
\renewcommand{\LARGE}{\@setfontsize\LARGE\@xviipt{20}}
\renewcommand{\huge}{\@setfontsize\huge\@xxpt{23}}
\renewcommand{\Huge}{\@setfontsize\Huge\@xxvpt{28}}
\makeatother

\makeatletter
\renewcommand{\section}{%
  \@startsection{section}{1}{\z@}%
                {-2.0ex \@plus -0.5ex \@minus -0.2ex}%
                { 1.5ex \@plus  0.3ex \@minus  0.2ex}%
                {\large\bf\raggedright}}
\renewcommand{\subsection}{%
  \@startsection{subsection}{2}{\z@}%
                {-1.8ex \@plus -0.5ex \@minus -0.2ex}%
                { 0.8ex \@plus  0.2ex}%
                {\normalsize\bf\raggedright}}
\renewcommand{\subsubsection}{%
  \@startsection{subsubsection}{3}{\z@}%
                {-1.5ex \@plus -0.5ex \@minus -0.2ex}%
                { 0.5ex \@plus  0.2ex}%
                {\normalsize\bf\raggedright}}
\renewcommand{\paragraph}{%
  \@startsection{paragraph}{4}{\z@}%
                {1.5ex \@plus 0.5ex \@minus 0.2ex}%
                {-1em}%
                {\normalsize\bf}}
\makeatother

\makeatletter
\renewenvironment{table}
  {\setlength{\abovecaptionskip}{0pt}\setlength{\belowcaptionskip}{7pt}\@float{table}}
  {\end@float}
\makeatother

\makeatletter
\renewcommand{\footnoterule}{\kern-3\p@ \hrule width 12pc \kern 2.6\p@}
\makeatother

\usepackage[numbers]{natbib}
\usepackage[utf8]{inputenc}
\usepackage[T1]{fontenc}
\usepackage{hyperref}
\usepackage{url}
\usepackage{booktabs}
\usepackage{amsfonts}
\usepackage{nicefrac}
\usepackage{microtype}
\usepackage{xcolor}
\usepackage{graphicx}
\usepackage{dsfont}
\usepackage{subcaption}
\usepackage{xspace}
\usepackage{amsmath}
\usepackage{amssymb}
\usepackage{mathtools}
\usepackage{amsthm}
\usepackage{tabularx}
\usepackage{multirow}
\usepackage{makecell}
\usepackage{pifont}
\usepackage{float}
\usepackage{wrapfig}
\usepackage{algorithmicx}
\usepackage{algcompatible}
\usepackage[capitalize,noabbrev]{cleveref}
\usepackage[disable,textsize=tiny]{todonotes}

\usepackage{amsmath,amsfonts,bm}

\newcommand{\cmark}{\ding{51}}%
\newcommand{\xmark}{\ding{55}}%

\def\eqref#1{equation~\ref{#1}}

\def\1{\bm{1}}

\def\vs{{\bm{s}}}

\DeclareMathAlphabet{\mathsfit}{\encodingdefault}{\sfdefault}{m}{sl}
\SetMathAlphabet{\mathsfit}{bold}{\encodingdefault}{\sfdefault}{bx}{n}

\newcommand{\E}{\mathbb{E}}

\newcommand{\KL}{D_{\mathrm{KL}}}

\newcommand{\fdiv}{$f$-divergence\xspace}
\newcommand{\fdivs}{$f$-divergences\xspace}

\newcommand{\Lf}{\calL_f(\theta)}

\newcommand{\clip}[3]{\text{clip}\left(#1,\ #2,\ #3\right)}

\newcommand{\Df}{\mathcal{D}_f}

\newcommand{\rKL}{\mathcal{D}_{\mathrm{rKL}}}

\newcommand{\calL}{\mathcal{L}}

\newcommand{\calS}{\mathcal{S}}
\newcommand{\calA}{\mathcal{A}}

\newcommand{\calB}{\mathcal{B}}
\newcommand{\calP}{\mathcal{P}}

\newcommand{\ind}[1]{\mathds{1}_{\left\{#1\right\}}}
\newcommand{\traj}{\vs_{1:T}}

\theoremstyle{plain}
\newtheorem{theorem}{Theorem}[section]

\theoremstyle{definition}
\newtheorem{definition}[theorem]{Definition}

\theoremstyle{remark}

\floatstyle{ruled}
\newfloat{algorithm}{tbp}{loa}
\floatname{algorithm}{Algorithm}

\newenvironment{ack}{\section*{Acknowledgments and Disclosure of Funding}}{}

\title{Learning To Sample From Diffusion Models Via Inverse Reinforcement Learning}

\author{%
  Constant Bourdrez \quad Alexandre Vérine \quad Olivier Cappé \\
  DI ENS, Ecole normale supérieure, Université PSL, CNRS, 75005 Paris, France \\
  \texttt{constant.bourdrez@ens.psl.eu} \quad \texttt{alexandre.verine@ens.psl.eu}
}

\date{}

\begin{document}

\maketitle

\begin{abstract}
  Diffusion models generate samples through an iterative denoising process guided by a pretrained neural network. Once the denoiser is fixed, the sampling algorithm itself (noise schedules, guidance scales, stochasticity profiles) still requires careful tuning, a process typically carried out through costly empirical grid search.
  In this work, we introduce an inverse reinforcement learning framework for learning sampling strategies without retraining the denoiser. We formulate the diffusion sampling procedure as a discrete-time finite-horizon Markov Decision Process, where actions correspond to optional modifications of the sampling dynamics. To optimize action scheduling, we avoid defining an explicit reward function and instead directly match the target behavior expected from the sampler using policy gradient techniques.
  We provide experimental evidence that this approach matches fine-tuned samplers and comes at a modest cost compared to grid search: on ImageNet-64, a single training run replaces exhaustive search at up to $9\times$ lower cost, with only 16\% overhead at inference.
\end{abstract}

\section{Introduction}
\label{sec:intro}

Diffusion models and related generative processes based on differential
equations have become a standard tool for modeling high-dimensional data,
with applications ranging from image synthesis to scientific and biological
modeling. Sample generation proceeds by simulating a reverse-time stochastic
or deterministic process that gradually transforms a simple Gaussian distribution into a complex data distribution.
This evolution is governed by a vector field implemented by a large neural network,
commonly referred to as the denoiser.

While the training of diffusion models has been extensively studied,
the sampling procedure itself remains a critical bottleneck.
A growing body of work proposes heuristic modifications of the
sampling dynamics, including restart sampling \cite{xu_restart_2023}, guidance mechanisms
\cite{ho2022classifierfreediffusionguidance, dhariwal_diffusion_2021},
noise schedule tuning \cite{lee_convergence_2023}, and stochasticity injection
\cite{karras_elucidating_2022}. Although often effective, these approaches are
typically designed in an ad hoc manner and rely on manually tuned hyperparameters, a process that is rarely acknowledged but can consume as many resources as training
the denoiser itself. For instance, the EDM framework \citep{karras_elucidating_2022},
one of the strongest image generation baselines, required extensive grid search over its stochasticity profile: a search over 3 hyperparameters on ImageNet-64
alone would require up to 1437 EFLOPs, compared to
the 1645 EFLOPs needed to train the denoiser.
More generally, evaluating a single hyperparameter configuration requires
generating tens of thousands of samples to compute evaluation metrics, and the cost grows exponentially with the number
of hyperparameters and model complexity. In this work, we propose to learn sampling-time hyperparameters
automatically by framing diffusion sampling as a sequential decision-making problem.

Because sampling is a trajectory optimization problem without an obvious per-step
reward, inverse reinforcement learning (IRL) provides a natural tool for learning
sampling policies from target state distributions
\citep{ziebart_maximum_nodate, ni_f-irl_2020, ho_generative_2016}.
However, existing IRL methods typically assume expert trajectories or actions,
whereas diffusion sampling provides only data marginals across noise levels.
RL and control approaches for diffusion models usually optimize explicit rewards or
fine-tune the denoiser \citep{black2023training, fan2023reinforcement, uehara2024fine, azangulov2025adaptivediffusionguidancestochastic, domingoenrich2025adjointmatchingfinetuningflow, desanti2025provablemaximumentropymanifold}.
Learning the sampler itself through state-only IRL, while keeping the denoiser fixed,
has therefore remained largely unexplored.

\begin{wraptable}{r}{0.45\textwidth}
    \vspace{-0.5cm}
    \centering
    \captionof{table}{Computational cost. \textit{Top}: training, \textit{Middle}: hyperparameter (HP) tuning  and our method \textit{Bottom}: inference for 50k samples.}
    \resizebox{\linewidth}{!}{\begin{tabular}{lccc}
    \toprule
                                       & CIFAR-10 & FFHQ    & ImageNet \\
    \midrule
    \multicolumn{4}{l}{\textit{Training Denoiser (EFLOPs)}}            \\
    \midrule
    EDM \cite{karras_elucidating_2022} & 25.7     & 50.9    & 1645.0   \\
    \midrule
    \multicolumn{4}{l}{\textit{Sampling Optimization (EFLOPs)}}        \\
    \midrule
    Grid search, 2 HP, $5^2$           & 0.96     & 4.2     & 70.2     \\
    Grid search, 2 HP, $10^2$          & 3.9      & 17.0    & 280.8    \\
    Grid search, 3 HP, $5^3$           & 4.8      & 21.2    & 350.9    \\
    Grid search, 3 HP, $8^3$           & 19.7     & 86.9    & 1437.4   \\
    Ours                               & 10.2     & 40.3    & 160.6    \\
    \midrule
    \multicolumn{4}{l}{\textit{Inference cost (EFLOPs, 50k samples)}}  \\
    \midrule
    Denoiser only                      & 0.039    & 0.170   & 2.8      \\
    Denoiser $+$ policy                & 0.049    & 0.238   & 3.2      \\
    Overhead                           & $+$26\%  & $+$40\% & $+$16\%  \\
    \bottomrule
\end{tabular}
}
    \label{tab:compute_cost}
\end{wraptable}

We frame diffusion sampling as an IRL problem adapted to continuous states and
absorbing terminal dynamics. Rather than tuning hyperparameters by exhaustive search,
we learn a policy that matches an expert occupancy measure specified only through
data marginals at selected noise levels \textit{i.e} no expert trajectories, no reward model, no denoiser retraining.
Our method achieves results matching or exceeding the best hand-crafted sampling
strategies at a fraction of the cost: a single training run replaces grid search,
reducing the optimization budget by up to $9\times$ on ImageNet-64 as shown in Table~\ref{tab:compute_cost}, with only 16\% overhead at inference.
Our contributions are threefold: (i) we formulate diffusion sampling as an MDP for adapting samplers at inference time without retraining the denoiser, (ii) we derive a policy-gradient objective that directly matches state occupancy measures through $f$-divergences, and (iii) we validate the framework across guidance, stochasticity injection, and renoising policies, exposing an interpretable trade-off between sample diversity and fidelity.
Figure~\ref{fig:intro_trajectories} illustrates the type of state-dependent interventions learned by our policies.
Starting from the same pretrained denoiser and standard reverse process, the learned sampler can decide when to strengthen guidance, when to inject stochasticity, or when to revisit earlier noise levels.
This perspective is important: the denoiser is not modified, but the trajectory followed through the denoising process is.
The resulting policies therefore act as learned inference-time controls rather than as new generative models.

\begin{figure}[t]
    \centering
    \includegraphics[width=0.95\textwidth]{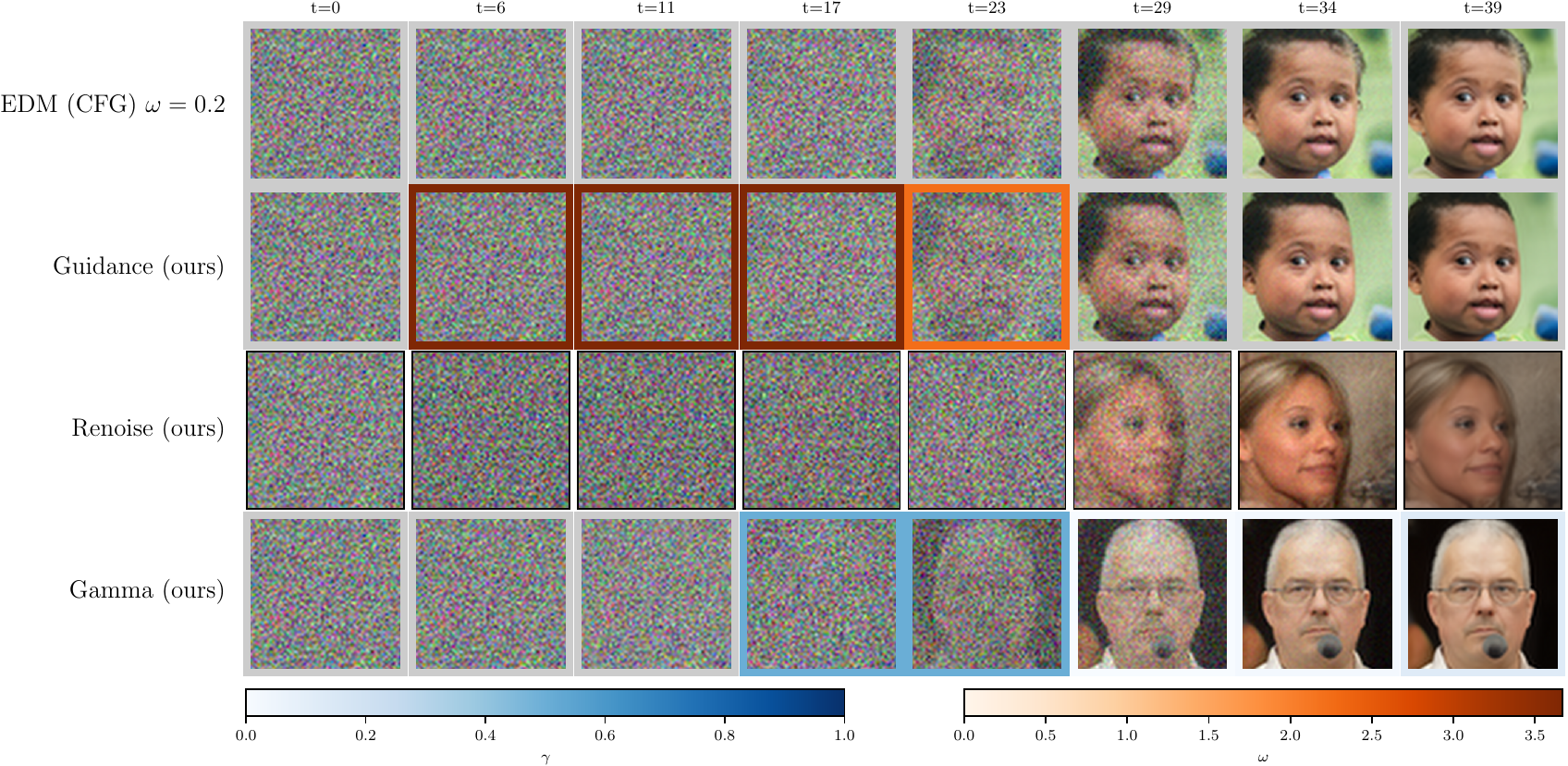}
    \caption{
        \textbf{Learned sampling policies adapt trajectories at inference time.}
        We compare trajectories from a standard guided diffusion sampler with trajectories produced by policies that learn guidance scales, renoising decisions, or stochasticity injection. All trajectories use the same pretrained denoiser; only the inference-time control policy changes. Frame colors indicate the selected action along the trajectory, showing that the learned sampler intervenes selectively rather than applying a fixed global schedule. These state-dependent controls adapt the sampler without denoiser retraining and reduce the need for manual hyperparameter tuning.
    }
    \label{fig:intro_trajectories}
\end{figure}

\section{Sample Generation with Diffusion Models}
\label{sec:diffusion}
Diffusion models generate data in $\mathbb{R}^d$ by transporting samples from a Gaussian distribution to a target data distribution \(P_0\), via a reverse-time stochastic or deterministic process \citep{song_bridging_2020,song_denoising_2022}. The reverse dynamics rely on a neural network \(F\) (the \emph{denoiser}) trained across noise levels to estimate the score of the noised data distribution. In practice, \(F\) is trained to predict the added noise \citep{ho_denoising_2020} or the denoised sample \citep{dhariwal_diffusion_2021}, and may be conditioned on auxiliary information \(c\) such as class labels or text prompts.

While diffusion models are commonly described in continuous time, practical
implementations rely on a finite discretization. We can see the generation process as a sequence of states \(s_t = (x_t, \sigma_t)\), where
\(x_t\) is the generated sample at time \(t\) and \(\sigma_t\) is the corresponding noise level defined by the noise schedule
\(\{\Sigma_N, \ldots, \Sigma_0\}\) where $\Sigma_0 = 0 < \Sigma_1 < \cdots < \Sigma_N$. This sequence of noise levels defines a sequence of marginal distributions $\{ P_i \}_{i=0}^N$ obtained by
convolving the data distribution with Gaussian noise of variance $\Sigma_i^2$. We assume that $\Sigma_0$ is small enough for \(P_0\) to approximate the target data distribution, and that
$\Sigma_N$ is large enough for \(P_N\) to approximate pure Gaussian noise. The sequence starts at $t=0$ from \(s_0 = (x_0, \Sigma_N)\) with \(x_0 \sim \mathcal{N}(0, \Sigma_N^2 I_d)\) and ends at $t=T$ with \(s_T = (x_T, \Sigma_0)\), where \(x_T\) is expected to follow the target distribution \(P_0\). Each step of the reverse process maps \((x_t, \sigma_t)\) to \((x_{t+1}, \sigma_{t+1})\) according to
\begin{equation}
    (x_{t+1}, \sigma_{t+1}) = H(x_t, \sigma_t , \ \dots),
    \label{eq:generic_update}
\end{equation}
where \(H\) denotes a generic update operator that uses the denoiser \(F\), evaluated at the current state $(x_t,\sigma_t)$ and possibly additional inputs such as conditioning information \(c\). We treat $\sigma_t=\Sigma_0$ as an absorbing condition, i.e., once reached, the process remains at this noise level for all subsequent steps. The operator $H$ can represent a variety of samplers: it is stochastic for DDPM \citep{ho_denoising_2020} and Score-SDE \citep{song_score-based_2021}, and deterministic for probability-flow ODE (PF-ODE) based samplers, e.g., first-order Euler's method \citep{song_denoising_2022}, second-order Heun's method \citep{jolicoeur-martineau_gotta_2021,karras_elucidating_2022}, or higher-order solvers such as DPM-Solver \citep{lu_dpm-solver_2022}. Notably, $H$ depends on hyperparameters that influence the sampling dynamics. In this work, we focus on three popular techniques to modify the sampling process: stochasticity injection, classifier-free guidance, and restart sampling.

\textbf{Stochasticity Injection:} The EDM approach of \citet{karras_elucidating_2022}, which serves as a strong baseline for image generation, employs a hyperparameter \(\gamma_{\mathrm{EDM}}\) to introduce controlled stochasticity. Using $H_{\mathrm{Heun}}$ to denote the deterministic Heun update operator, the modified update rule relies on an amplified noise level $\hat{\sigma}_t\coloneq \sigma_t\,(1+\gamma_{\mathrm{EDM}})$ and a noise perturbation $z\sim \mathcal{N}(0,I)$:
\begin{align}
    H_{\mathrm{EDM}}(x_t, \sigma_t) & = H_{\mathrm{Heun}}(x_t + \sqrt{\hat{\sigma}_t^2 - \sigma_t^2}\, z, \hat{\sigma}_t).
\end{align}
For some datasets such as CIFAR-10 \citep{alex_krizhevsky_learning_2009} or FFHQ \citep{karras_style-based_2019}, $\gamma_{\mathrm{EDM}}=0$ yields optimal results, while for more complex datasets such
as ImageNet \citep{deng_imagenet_2009}, non-zero values can improve sample quality, requiring the joint tuning of four parameters that define the schedule of stochasticity injection.

\textbf{Classifier-Free Guidance:} Classifier-free guidance \citep{ho2022classifierfreediffusionguidance} is one of the most popular techniques to enhance sample quality in conditional diffusion models. For any $H$, where the model $F$ has been trained to be conditioned on $c$ or unconditioned ($c=\emptyset$), the update rule is kept identical to $H$ but the denoiser is modified as
\begin{align}
    F_{\mathrm{CFG}}(x_t, \sigma_t, c)
    = (1 + \omega) F(x_t, \sigma_t, c) - \omega F(x_t, \sigma_t, \emptyset).
\end{align}
Depending on the guidance scale \(\omega\), this technique can significantly improve sample fidelity, at the cost of diversity \citep{ho2022classifierfreediffusionguidance,saharia_photorealistic_2022}. However, there is no consensus on the optimal value of \(\omega\), nor on whether it should remain fixed or vary across the sampling trajectory.

\textbf{Restart Sampling:} Another approach consists in restarting the sampling process to escape poor local generations \citep{xu_restart_2023}. When the sample reaches \(\Sigma_{i}\), it is sent back \(K\) times to a higher noise level \(\Sigma_{j}\) ($j>i$) by adding Gaussian noise $z \sim \mathcal{N}(0,I)$, then resampled with a PF-ODE operator $H_{\mathrm{ODE}}$: denoting by $k_t$ the number of restarts up to time $t$,
\begin{align*}
    H(x_t, \sigma_t) = \begin{cases} H_{\mathrm{ODE}}(x_t + \sqrt{\Sigma_{j}^2 - \Sigma_{i}^2}\,z,\,\Sigma_{j}) & \text{if } k_t < K,\; \sigma_t = \Sigma_{i}, \\ H_{\mathrm{ODE}}(x_t, \sigma_t) & \text{otherwise.}\end{cases}
\end{align*}
The variant we consider (\emph{Renoise}) allows renoising to one of the $M{=}4$ previous noise levels upon reaching any restart level, with a fixed number of function evaluations (NFE) budget instead of a fixed restart count.

\section{State Distribution Matching with Inverse Reinforcement Learning}\label{sec:irl}
In this section, we review inverse reinforcement learning (IRL) tools that motivate our formulation in Section~\ref{sec:optimization}.
Our goal is to cast the diffusion sampling algorithm from Section~\ref{sec:diffusion} as a decision-making problem, in order to select sampler hyperparameters (e.g., guidance scales, injected noise, or renoise rules).
Crucially, we do \emph{not} assume access to an expert policy or expert actions.
Instead, we only observe samples from the data distribution and their noisy counterparts across noise levels, which naturally aligns with IRL settings based on matching state distributions.

\textbf{MDP formulation} We formalize diffusion sampling as a finite-horizon Markov Decision Process (MDP) $(\calS, \calA, \calP, \rho, T)$. We consider discrete-time policies with a fixed horizon $T$, initial states sampled from distribution $\rho$, and transition dynamics defined by the kernel $\calP$.
The state space $\calS$ is continuous, and the action space $\calA$ is discrete.
Let $\pi_\theta(a_t \mid s_t)$ be a parametrized policy over actions conditioned on states, and let $\traj = (s_1, \dots, s_T)$ denote a trajectory generated by this policy.
In Section~\ref{sec:optimization}, we will specify how each component of the MDP is instantiated for diffusion samplers.

\textbf{State occupancy measures}
In our setting, the expert policy is unknown and we do not observe expert actions or full trajectories; the only supervision comes from expert states (data samples and their noised versions).
This state-only regime motivates comparing policies through their \emph{state occupancy measures}, i.e., the marginal distribution over states visited along trajectories.

\begin{definition}[Occupancy Measure]
    The policy $\pi_\theta$ induces a state occupancy measure $\mu_\theta$ defined as
    \begin{equation}
        \forall \Phi \in \calB(\calS), \quad
        \int \Phi(s) \, \mu_\theta(ds)
        =
        \mathbb{E}_{\traj \sim p_\theta}
        \left[
            \frac{1}{T}\sum_{t=1}^T \Phi(s_t)
            \right],
    \end{equation}
    where $\calB(\calS)$ is the set of bounded measurable functions on $\calS$, and $p_\theta$ is the trajectory distribution induced by $\pi_\theta$.
    \label{def:occupancy_measure}
\end{definition}

When $\calS$ is discrete, $\mu_\theta$ admits the classical interpretation of an expected visitation frequency: $\mu_\theta(s)
    =
    \E_{\traj \sim p_\theta}
    \big[
        \frac{1}{T}
        \sum_{t=1}^T
        \ind{s_t = s}
        \big].$
We will not assume $\calS$ is discrete in what follows; the discrete case is only meant as intuition.
Similarly, we define the expert occupancy measure $\mu_E$ as the state occupancy measure induced by the (unknown) expert policy under the same environment dynamics.

\textbf{State-marginal matching via $f$-divergences}
State-marginal matching IRL seeks a policy whose state occupancy measure $\mu_\theta$ matches the expert occupancy measure $\mu_E$.
Concretely, one minimizes a discrepancy between these probability measures where several approaches focus on the Kullback--Leibler (KL) divergence, typically via maximum-entropy RL and reward learning \citep{ziebart_maximum_nodate,ni_f-irl_2020}.
More generally, one can use the broader class of $f$-divergences \citep{csiszar1967}, which provides a unified way to quantify distribution mismatch.

\begin{definition}[$f$-divergence]
    The $f$-divergence between two probability measures $Q$ and $P$, induced by a convex function $f$ satisfying $f(1)=0$, is
    \begin{equation}
        D_f(Q \parallel P) \coloneq \mathbb{E}_P\!\left[ f \left( \frac{dQ}{dP} \right) \right].
        \label{eq:f_divergence}
    \end{equation}
\end{definition}

This family includes KL, Reverse-KL (rKL), and total variation distance. The choice of $f$ matters: different divergences induce different trade-offs e.g., mode-covering (KL) vs. mode-seeking behavior (rKL), which can change the qualitative properties of the learned policy \citep{nowozin_f-gan_2016,verine_quality_2024}.

In this work, we consider objectives of the form $\min_{\pi_\theta} \; D_f\!\left(\mu_E \,\|\, \mu_\theta \right)$ subject to the MDP dynamics induced by the sampler.

\textbf{Prior work and our approach}
Classical IRL approaches first learn a reward under maximum-entropy RL \citep{ziebart_maximum_nodate, ni_f-irl_2020}.
GAIL \citep{ho_generative_2016} minimizes a state--action occupancy divergence via adversarial training, and \citet{ghasemipour_divergence_2019} extend this to general $f$-divergences via convex duality.
Both require expert actions, which are unavailable in our setting.
$f$-PG \citep{agarwal_f_2023} focuses on terminal state matching, which yields high-variance gradients over long trajectories. A related IRL approach is proposed by \citet{yoon2024maximumentropyinversereinforcement},
who learn an energy-based reward to guide sampling.
Our approach (i) directly minimizes a state-marginal $f$-divergence without reward learning, (ii) targets intermediate noise levels rather than only the terminal state, and (iii) uses PPO-style clipping to stabilize training.

\section{Improving Diffusion with Inverse Reinforcement Learning}
\label{sec:optimization}
In this section, we present our theoretical contributions and introduce a general framework for controlling generative sequential processes.
We formalize diffusion sampling as a finite-horizon MDP (Section~\ref{sec:mdp}), derive a policy gradient for directly minimizing $f$-divergences between state occupancy measures (Section~\ref{sec:policy_optimization}), and describe the practical optimization procedure.

\subsection{The Markov Decision Process Model for Diffusion Sampling}\label{sec:mdp}
To apply our inverse reinforcement learning algorithm, we frame each previously defined component as an element of diffusion sampling.
The state space $\calS$ is continuous, and each state is defined as
\[
    s_t = (x_t, \sigma_t),
\]
where $\sigma_t \in \{\Sigma_0, \ldots, \Sigma_N\}$ denotes the noise level at time step $t$ and $x_t \in \mathbb{R}^d$ is the current sample. The action space $\calA$ is discrete and encodes control decisions applied to the sampling dynamics. For instance, actions can correspond to selecting different discretized guidance scales, noise injection levels, or whether to renoise and how much at each timestep.
The initial distribution $\rho$ corresponds to the marginal distribution at the highest noise level, the Gaussian distribution $\mathcal{N}(0, \Sigma_N^2 I_d)$. Given a state-action pair $(s_t, a_t)$, the transition kernel $\calP$ is induced by the discretized diffusion dynamics together with the selected control action.
The policy does not explicitly depend on time but the available actions may depend on the current timestep $t$. We will treat $\sigma_t = \Sigma_0$ as an absorbing condition, so that once the process reaches this noise level, it remains there for all subsequent timesteps. In practice, this can be implemented by defining $H$ such that $s_{t+1} = s_t$ whenever $\sigma_t = \Sigma_0$. In this setup, the average time required to reach the terminal noise level may be incorporated in the performance metric of a given policy.

As the state space is continuous, we rely on Definition~\ref{def:occupancy_measure}. Exploiting the structure of the state space, we further decompose the occupancy measure
as $\mu_\theta(s)
    =
    w_\theta(\sigma)
    \times
    p_\theta(x \vert \sigma),$ where $w_\theta(\sigma)$ denotes the weight assigned by the policy $\pi_\theta$ to noise level $\sigma$, and $p_\theta(x \vert \sigma)$ is the marginal
distribution of $x$ at noise level $\sigma$ under the policy. For samplers such as noise injection or guidance, $w_\theta$ is uniform over noise levels, since the process always
progresses from $\Sigma_N$ to $\Sigma_0$ in $T$ steps. For samplers with renoising, $w_\theta$ reflects the distribution over noise levels induced by the policy, which may spend more time at certain noise levels depending on the restart strategy.

\textbf{Expert occupancy measure}
In our framework, the expert is specified through samples from the target distribution at selected noise levels, without access to expert trajectories, actions, or transition dynamics. The expert occupancy measure $\mu_E(s) = w_E(\sigma) \times p_E(x \vert \sigma)$ has weights $w_E(\sigma)$ that are a design choice: we set them uniformly over $\{\Sigma_N, \dots, \Sigma_1\}$ and concentrate the remaining weight on $\Sigma_0$, so that the absorbing terminal states provide a dense learning signal and $w_E(\Sigma_0)$ acts as a proxy for the mean time the policy should spend at the terminal noise level.

\textbf{On the leverage of $w_E$}
By the data processing inequality, $\Df(\mu_E \Vert \mu_\theta) \geq \Df(w_E \Vert w_\theta)$, so $w_E$ directly shapes the optimization. For KL and rKL the objective decomposes into a schedule term $\Df(w_E \Vert w_\theta)$ and a weighted sum of per-noise-level divergences $\Df(p_E(\cdot|\sigma) \Vert p_\theta(\cdot|\sigma))$, jointly controlling the noise-level schedule and sample quality at each level.
\begin{align}
    \KL(\mu_E \Vert \mu_\theta)  & = \KL(w_E \Vert w_\theta) + \textstyle\sum_{\sigma} w_E(\sigma)\, \KL\!\left(p_E(\cdot \mid \sigma) \Vert p_\theta(\cdot \mid \sigma)\right), \label{eq:kl_decomposition}         \\
    \rKL(\mu_E \Vert \mu_\theta) & = \rKL(w_E \Vert w_\theta) + \textstyle\sum_{\sigma} w_\theta(\sigma)\, \rKL\!\left(p_E(\cdot \mid \sigma) \Vert p_\theta(\cdot \mid \sigma)\right). \label{eq:rkl_decomposition}
\end{align}
\subsection{Learning Policies by Minimizing $f$-Divergences}
\label{sec:policy_optimization}
We now describe how to optimize sampling policies within the MDP framework introduced above. We directly optimize the policy to match the expert occupancy measure by optimizing the following:
\begin{equation}
    \min_{\theta}\Lf = \min_{\theta} \; \Df\!\left( \mu_E \,\|\, \mu_\theta \right).
\end{equation}

\textbf{Policy gradient optimization}
We show in Appendix~\ref{proof:thm_grad} that the gradient of the $f$-divergence objective with respect to the policy parameters, denoted $\nabla_\theta \Lf$, can be expressed in the form of a policy gradient.
\begin{theorem}
    \label{thm:grad_inf}
    Let $\pi_\theta$ be a policy parameterized by $\theta$ and let $\Lf = \Df( \mu_E \Vert \mu_\theta)$ be the $f$-divergence between the expert occupancy measure $\mu_E$ and the occupancy measure $\mu_\theta$ induced by the policy $\pi_\theta$. Then, the gradient of $\Lf$ with respect to $\theta$ is given by
    \begin{equation}
        \nabla_\theta \Lf
        =
        \E_{\traj \sim p_\theta}
        \left[
            \frac{1}{T}
            \sum_{t=1}^T
            \nabla_\theta\log\pi_\theta(a_t\vert s_t)
            \sum_{t'\geq t}
            h_f\!\left(
            \frac{\mu_E(s_{t'})}{\mu_\theta(s_{t'})}
            \right)
            \right],
    \end{equation}
    where $h_f(x)=f(x)-xf'(x)$ for all $x$ and  $f$ is the generator function of the \fdiv (see Appendix~\ref{sec:appendix_proofs} for details).
\end{theorem}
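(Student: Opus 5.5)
We differentiate $\Lf = \int f\bigl(\mu_E(s)/\mu_\theta(s)\bigr)\,\mu_\theta(ds)$ in two stages. First we treat it as a functional of $\mu_\theta$ and take its first variation. Then we push that variation through Definition~\ref{def:occupancy_measure} and use the score-function identity for the trajectory law $p_\theta$.

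\textbf{Step 1 (first variation).} Write $r = \mu_E/\mu_\theta$ for the density ratio with respect to a common dominating measure, so $\Lf = \int f(r(s))\,\mu_\theta(s)\,ds$. Differentiating under the integral gives
\[
\nabla_\theta \int f\!\left(\tfrac{\mu_E}{\mu_\theta}\right)\mu_\theta
= \int \Bigl[f(r) - r f'(r)\Bigr]\nabla_\theta \mu_\theta
= \int h_f\bigl(r(s)\bigr)\,\nabla_\theta\mu_\theta(ds),
\]
because $\partial_{\mu}\bigl(\mu f(\mu_E/\mu)\bigr) = f(r) - r f'(r)$. This says the gradient equals $\nabla_\theta \int \Phi\, d\mu_\theta$ with the test function $\Phi = h_f\circ r$ held fixed. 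The dependence of $\Phi$ on $\theta$ contributes nothing, which is precisely the content of the $f - xf'$ combination.

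\textbf{Step 2 (policy gradient for a fixed test function).} Apply Definition~\ref{def:occupancy_measure} with $\Phi$:
\[
\int\Phi\,d\mu_\theta = \E_{p_\theta}\Bigl[\tfrac1T\textstyle\sum_{t'}\Phi(s_{t'})\Bigr].
\]
The trajectory density factorizes as $\rho(s_0)\prod_t \pi_\theta(a_t\mid s_t)\calP(s_{t+1}\mid s_t,a_t)$. Only the policy factors depend on $\theta$, so $\nabla_\theta\log p_\theta = \sum_t \nabla_\theta\log\pi_\theta(a_t\mid s_t)$. The log-derivative trick then gives
\[
\E\Bigl[\tfrac1T\textstyle\sum_{t'}\Phi(s_{t'})\sum_t\nabla_\theta\log\pi_\theta(a_t\mid s_t)\Bigr].
\]
Causality removes the terms with $t' < t$. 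For such terms, condition on the history up to $s_t$: $\Phi(s_{t'})$ is then measurable, and $\E[\nabla_\theta\log\pi_\theta(a_t\mid s_t)\mid s_t] = 0$. After reordering the sums, only $\sum_{t'\ge t}$ survives, which is the stated formula. The indexing convention (whether $a_t$ affects $s_t$ or only $s_{t+1}$) only shifts whether $t'=t$ is included. With states $s_1,\dots,s_T$ as in the statement, including $t'=t$ is harmless, since that term has zero mean anyway.

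\textbf{Main obstacle.} The delicate point is justifying Step 1 rigorously on a continuous state space with absorbing dynamics. We need $\mu_E \ll \mu_\theta$, the existence of densities, and differentiability of $\theta\mapsto\mu_\theta$. We also need an interchange of derivative and integral, which requires $h_f(r)$ to be integrable and dominated. I would first handle this by assuming $\pi_\theta$ is smooth with bounded score and that $r$ is bounded on the relevant support. Under those assumptions, dominated convergence applies to the difference quotients of $\mu_\theta f(\mu_E/\mu_\theta)$, with the convexity of $f$ supplying monotone bounds on those quotients. The absorbing state is handled without extra work, because repeated copies of $s_T$ simply appear in the sum over $t'$.
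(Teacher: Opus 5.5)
Your proof is correct and follows the paper's route: you differentiate $\mu_\theta f(\mu_E/\mu_\theta)$ pointwise to obtain $\int h_f(\mu_E/\mu_\theta)\,\nabla_\theta\mu_\theta$, pass to trajectories via the occupancy definition and the log-derivative trick, and reindex the double sum. Your middle step is cleaner than the paper's, which replaces $\nabla_\theta\log\mu_\theta(s_{t'})$ by $\sum_{t\le t'}\nabla_\theta\log\pi_\theta(a_t\mid s_t)$; that identity is not pointwise and holds only after integrating against $h_f(\mu_E/\mu_\theta)$ and averaging over $t'$, which is exactly what freezing $\Phi$ and invoking causality justifies. One caveat: take the initial state to be $s_1\sim\rho$, as the formula summing from $t=1$ requires. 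With your $s_0\sim\rho$ and occupancy over $s_1,\dots,s_T$, a non-vanishing score term for $a_0$ would be missing from the stated formula, so the indexing convention matters for more than the harmless $t'=t$ terms.
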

This result enables the use of standard policy gradient methods to optimize the sampling policy using Monte Carlo estimates obtained from rollouts of the current policy.
While this result seems simple and similar to other policy gradient theorems, the main novelty lies in the direct optimization of the policy instead of a reward function.

\textbf{Density ratio estimation}
Computing the learning signal $h_f(\mu_E(s_t)/\mu_\theta(s_t))$ requires estimating the density ratio $\mu_E/\mu_\theta$ at each visited state.
We do so by training a discriminator $D_\phi : \calS \to [0,1]$ to distinguish expert states (data samples and their noised versions) from policy states (samples generated by the current policy) \citep{sugiyama_density_2010, goodfellow_generative_2014}.
The discriminator is retrained jointly with the policy at each iteration; full details and the training procedure are given in Appendix~\ref{app:density_ratio}.

\textbf{Sample efficiency and practical considerations}
Generating trajectories is computationally expensive, as each rollout requires
simulating a full diffusion sampling trajectory. To improve sample efficiency, we reuse trajectories
over multiple policy updates and add importance sampling corrections to the policy gradient. While this modification is theoretically sound, we found it to be unstable since it leads to high-variance gradient estimates.
We attempted to mitigate this issue by subtracting a baseline, a standard technique for reducing variance in policy gradient methods.
However, in our setting, there is no clear choice of baseline; we use the global batch mean $\bar{A}$ of the cumulative learning signals as a variance-reducing baseline, following standard PPO practice.

To further reduce variance, we adopt a Proximal Policy Optimization (PPO)-style clipped objective~\citep{schulman_proximal_2017},
which constrains the policy updates at each iteration as well as reward normalization.
Under the same assumptions as Theorem~\ref{thm:grad_inf}, the PPO-style clipped
gradient for minimizing an $f$-divergence is given by
\begin{equation*}
    \nabla_\theta\mathcal{L}^{\text{\tiny{PPO}}}_f(\theta)
    =
    \E_{p_{\theta_0}}
    \left[
        \sum_{t=1}^T\nabla_\theta
        \max\!\left(
        \frac{\pi_\theta(a_t \vert s_t)}{\pi_{\theta_{0}}(a_t \vert s_t)}\hat{A}_t,\,
        \text{clip}(\frac{\pi_\theta(a_t \vert s_t)}{\pi_{\theta_{0}}(a_t \vert s_t)}, 1-\epsilon, 1+\epsilon)(\theta)\hat{A}_t
        \right)
        \right]
\end{equation*}
where $A_t = \sum_{t'\geq t} h_f\!\left(\frac{\mu_E(s_{t'})}{\mu_\theta(s_{t'})}\right)$ is the cumulative learning signal, $\bar{A}$ its global batch mean, and $\hat{A}_t = \frac{1}{T}(A_t - \bar{A})$ the recentered version. Finally, following common practice, we approximate
$h_f\!\left(\mu_E(s_t)/\mu_\theta(s_t)\right)$ by
$h_f\!\left(\mu_E(s_t)/\mu_{\theta_0}(s_t)\right)$ to avoid recomputing the
density ratio at every policy update. We provide a detailed Algorithm~\ref{alg:importance_sampling} in Appendix~\ref{app:training_details}.

\section{Experiments}
We evaluate whether state-occupancy matching can replace manual sampler tuning once the denoiser is fixed.
The experiments are organized around three sampling-time interventions: adaptive classifier-free guidance, stochasticity injection, and renoising.
The main results in Table~\ref{tab:all_results} support three claims: learned policies can match or improve strong hand-tuned samplers, the best intervention is dataset-dependent, and the choice of $f$-divergence induces a stable precision--recall trade-off.
The ablations then isolate the design choices that make these policies effective.

\subsection{Experimental Setup}

Experiments are conducted on CIFAR-10 ($32\times32$), FFHQ ($64\times64$), and ImageNet ($64\times64$) using pretrained EDM models \citep{karras_elucidating_2022}.
The expert conditional distribution $p_E(x|\sigma)$ is the empirical dataset distribution at noise level $\Sigma_0 = 0$ with additional Gaussian noise at level $\sigma$.
All policy generations are evaluated using FID \citep{heusel_gans_2017} with 50k samples, Precision and Recall \citep{kynkaanniemi_improved_2019} using the improved topological method of \citet{kim_toppr_2023}, and NFE (when relevant).

\textbf{Computational cost}
Table~\ref{tab:compute_cost} compares training and inference costs of our method against systematic hyperparameter tuning via grid search.
As the number of hyperparameters or the denoiser complexity grows, grid search becomes prohibitively expensive, whereas our method scales more favorably. At inference time,
the policy adds between 16\% and 40\%
overhead in FLOPs, which is modest given
that it replaces the need for any manual
tuning.
\subsection{Results}
\label{sec:results}
\begin{figure}[t]
    \centering
    \begin{minipage}[b]{0.5\textwidth}
        \begin{table}[H]
            \centering
            \caption{Results across all strategies. Bold indicates the best value within each dataset--strategy block. No method should dominate across all metrics, as the choice of $f$-divergence controls the precision--recall trade-off.}
            \label{tab:all_results}
            \vspace{0.2cm}
            \resizebox{\textwidth}{!}{{\renewcommand{\arraystretch}{0.85}
  \begin{tabular}{l|l|rrr|r}
    \toprule
    Strategy                                 & Method & FID $\downarrow$ & Prec $\uparrow$ & Rec $\uparrow$  & NFE  \\
    \midrule
    \multicolumn{6}{l}{\textbf{CIFAR-10 $32\times32$}}                                                              \\
    \midrule
    \multirow{3}{*}{CFG}                     & EDM+CFG & 2.07             & 80.6            & 70.9            & 35   \\
                                             & KL     & $\mathbf{2.02}$  & $79.8$          & $\mathbf{71.9}$ & 35   \\
                                             & rKL    & $2.38$           & $\mathbf{81.2}$ & $70.6$          & 35   \\
    \midrule
    \multirow{3}{*}{$\gamma_{\mathrm{EDM}}$} & EDM    & \textbf{1.98}    & 78.7            & 72.9            & 35   \\
                                             & KL     & $2.01$           & $78.8$          & $\mathbf{73.1}$ & 35   \\
                                             & rKL    & $2.04$           & $\mathbf{79.4}$ & $72.3$          & 35   \\
    \midrule
    \multirow{3}{*}{Renoise}                 & EDM    & 3.42             & 76.7            & \textbf{72.5}   & 17   \\
                                             & KL     & \textbf{3.18}    & 77.3            & \textbf{72.5}   & 17.7 \\
                                             & rKL    & 3.21             & \textbf{78.1}   & 71.6            & 30.5 \\
    \midrule
    \midrule

    \multicolumn{6}{l}{\textbf{FFHQ} $64\times64$}                                                                  \\
    \midrule
    \multirow{3}{*}{CFG}                     & EDM+CFG & 3.11             & 90.7            & 87.8            & 79   \\
                                             & KL     & $\mathbf{3.03}$  & $90.6$          & $\mathbf{88.6}$ & 79   \\
                                             & rKL    & $3.04$           & $\mathbf{90.8}$ & $88.3$          & 79   \\
    \midrule
    \multirow{3}{*}{$\gamma_{\mathrm{EDM}}$} & EDM    & 2.56             & 90.1            & \textbf{89.4}   & 79   \\
                                             & KL     & $2.53$           & $90.4$          & $88.2$          & 79   \\
                                             & rKL    & $\mathbf{2.49}$  & $\mathbf{90.9}$ & $87.9$          & 79   \\
    \midrule
    \multirow{3}{*}{Renoise}                 & EDM    & \textbf{2.60}    & 90.5            & \textbf{89.0}   & 59   \\
                                             & KL     & 3.04             & \textbf{90.9}   & 88.1            & 87.4 \\
                                             & rKL    & 3.07             & \textbf{90.9}   & 87.9            & 68.2 \\
    \midrule
    \midrule

    \multicolumn{6}{l}{\textbf{ImageNet} $64\times64$}                                                              \\
    \midrule
    \multirow{3}{*}{$\gamma_{\mathrm{EDM}}$} & EDM    & \textbf{2.12}    & 85.9            & 91.7            & 511  \\
                                             & KL     & 2.14             & 85.7            & \textbf{92.0}   & 511  \\
                                             & rKL    & 2.24             & \textbf{86.3}   & 91.1            & 511  \\
    \midrule
    \multirow{3}{*}{Renoise}                 & EDM    & 3.01             & \textbf{85.8}   & 91.8            & 350  \\
                                             & KL     & 2.96             & 85.4            & \textbf{92.3}   & 51.8 \\
                                             & rKL    & \textbf{2.92}    & 85.4            & 92.1            & 52.1 \\
    \bottomrule
  \end{tabular}}
}
        \end{table}
    \end{minipage}
    \hfill
    \begin{minipage}[b]{0.45\textwidth}
        \begin{figure}[H]
            \centering
            \subfloat[Adaptive Stochasticity on FFHQ]{
                \includegraphics[width=0.98\textwidth]{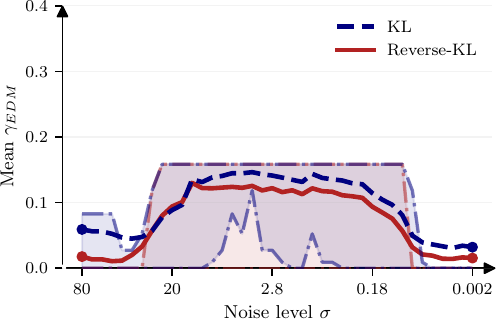}\label{fig:gamma_profiles_ffhq}}\\
            \subfloat[Adaptive Guidance on FFHQ]{
                \includegraphics[width=0.98\textwidth]{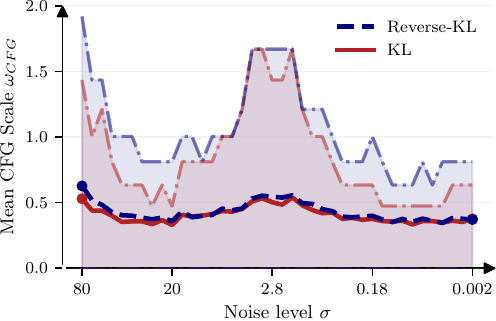}\label{fig:cfg_profiles_ffhq}}
            \caption{Learned $\gamma_{\mathrm{EDM}}(x,\sigma)$ and $\omega(x,\sigma)$ profiles on FFHQ for different \fdivs. At each noise level, the solid curve is the average value selected by the policy across generated samples, and the shaded band shows the interquartile range (25th--75th percentiles).}
            \label{fig:profiles_ffhq}
        \end{figure}
    \end{minipage}
\end{figure}
We evaluate three strategies, all learned with the same state-occupancy matching objective and without modifying the denoiser. Table~\ref{tab:all_results} reports FID, Precision, Recall, and NFE.
Baselines are chosen per strategy: best constant-guidance CFG from a grid search, the EDM stochasticity hyperparameters from \citet{karras_elucidating_2022}, and EDM with reduced NFE for comparison to the adaptive renoising policies.

\textbf{Analysis 1: adaptive guidance improves the quality--diversity trade-off}
We learn an adaptive guidance schedule $\omega(x,\sigma)$ on CIFAR-10 and FFHQ using conditional EDM models.
The action space consists of a discrete set of guidance scales; the baseline is the best constant guidance value found by grid search ($\omega{=}0.2$ for CIFAR-10, $\omega{=}0.1$ for FFHQ).
Adaptive policies improve the fixed-guidance baseline on the main quality--diversity trade-off.
KL achieves better Recall (diversity), while rKL improves Precision (fidelity).
The learned profile in Figure~\ref{fig:cfg_profiles_ffhq} reveals that the policy applies high guidance to a small fraction of samples at each timestep, concentrating this effect at medium-to-low noise levels corresponding to the speciation phase of sampling \citep{biroli_dynamical_2024}, where topological structures in the images emerge.

\textbf{Analysis 2: stochasticity injection is hard to hand tune}
We learn a state-dependent stochasticity profile $\gamma_{\mathrm{EDM}}(x,\sigma)$ on CIFAR-10, FFHQ, and ImageNet.
The action space consists of a discrete set of noise injection levels; the baseline is EDM with the stochasticity hyperparameters proposed by \citet{karras_elucidating_2022}.
The learned policy matches or improves this baseline on most metrics, while retaining the same denoiser and number of function evaluations.
On FFHQ (Figure~\ref{fig:gamma_profiles_ffhq}), the learned profile recovers the qualitative EDM pattern: little stochasticity at early noise levels and almost none near the end, with KL injecting more noise overall to increase diversity.
The CIFAR-10 profiles are more state-dependent (Appendix~\ref{app:gamma_profiles}, Figure~\ref{fig:gamma_profiles}), illustrating why exhaustive search over the four EDM hyperparameters is hard to replace manually.

\textbf{Analysis 3: renoising trades quality against computation}
We learn an adaptive renoising schedule on CIFAR-10, FFHQ, and ImageNet.
At each step, the policy chooses to continue denoising or revert to one of $M{=}4$ previous noise levels; the baseline is EDM run with a reduced NFE budget chosen to compare against the compute regime reached by adaptive renoising.
Renoising improves FID on CIFAR-10 and ImageNet, and substantially reduces NFE on ImageNet relative to the restart-style EDM baseline; on FFHQ, however, the deterministic EDM sampler remains stronger in FID.
KL-based policies reach the terminal state faster (lower NFE), while rKL allocates more computation to high-noise regions, resulting in higher NFE without additional FID gains.

\textbf{Divergence choice controls precision and recall}
Across the three interventions, the choice of divergence provides a direct and interpretable mechanism for controlling the precision--recall trade-off \citep{nowozin_f-gan_2016, verine_quality_2024}.
KL is mode-covering: it encourages the policy to visit diverse regions of the data distribution, improving Recall at the cost of Precision.
rKL is mode-seeking: it concentrates probability mass on high-density regions, improving Precision at the cost of Recall.
This behavior is visible across guidance, stochasticity injection, and renoising, and is consistent with the occupancy-measure decomposition of Section~\ref{sec:policy_optimization}.

\subsection{Ablation Studies}
The ablations address the main implementation questions raised by the learned-policy formulation: whether stochastic policies are necessary, whether the method is sensitive to the selected divergence, whether state conditioning matters, and how much architecture/action-space design contributes to performance.

\textbf{Temperature}
The policy entropy can be controlled at inference time via a temperature parameter $\beta$: action $a_t$ is sampled as $a_t \sim \pi_\theta(a_t \mid s_t)^{1/\beta} / Z$.
As shown in Figure~\ref{fig:temperature_pr_nfe}, higher $\beta$ increases NFE as the policy more frequently revisits higher-noise states, but Precision and Recall remain stable across a wide range, confirming robustness to entropy changes.
Lower $\beta$ (more deterministic) degrades FID, suggesting that deterministic schedules commonly used in practice may be inherently sub-optimal.
A similar behavior is observed for $\gamma_{\mathrm{EDM}}$ policies (Appendix~\ref{app:temp_gammas}).
\begin{figure}[!ht]
    \centering
    \includegraphics[width=0.8\textwidth]{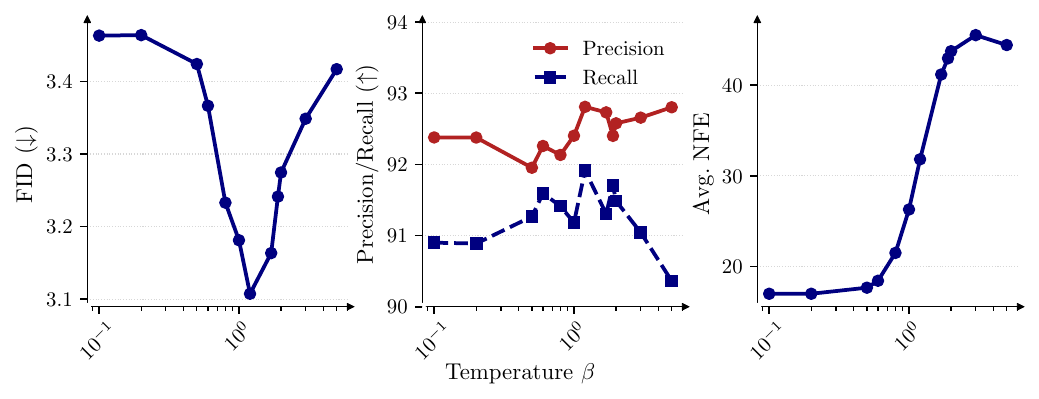}
    \caption{Effect of temperature $\beta$ on Precision, Recall, and NFE for learned renoise policies on CIFAR-10. FID increases for $\beta < 1$ highlighting the importance of stochasticity,
        while higher $\beta$ increases NFE without significant Precision/Recall changes. }
    \label{fig:temperature_pr_nfe}
\end{figure}

\textbf{Choice of $f$-divergence}
We evaluate additional $f$-divergences beyond KL and rKL for adaptive CFG on CIFAR-10 (Appendix~\ref{app:cfg_fdivs}, Table~\ref{tab:cfg_fdivs_cifar10}).
All choices yield competitive FID scores, confirming that the framework is robust to this hyperparameter.
However, symmetric divergences (TV, Hellinger, JS) do not exhibit the same strong mode-covering or mode-seeking tendency as KL or rKL, and produce less interpretable guidance profiles.
KL and rKL therefore remain the recommended choices when a controlled precision--recall trade-off is desired.
\begin{figure}[t!]
    \begin{minipage}[b]{0.45\textwidth}
        \begin{figure}[H]
            \centering
            \includegraphics[width=\textwidth]{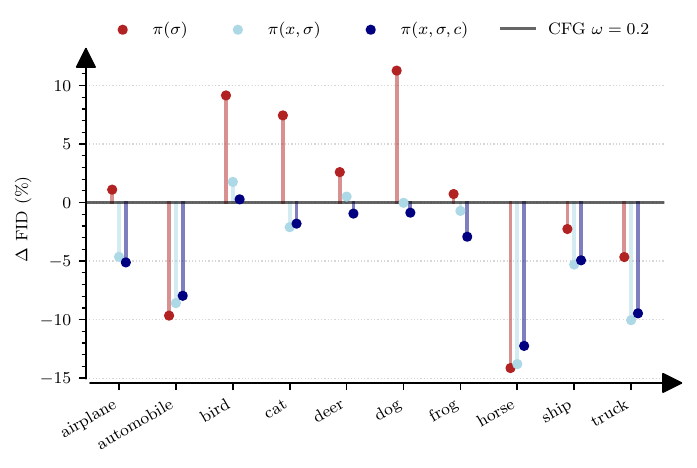}
            \caption{Per-class FID for CFG policies on CIFAR-10. Conditioning on $x$ improves over noise-level-only policies and nearly matches explicit class conditioning.}
            \label{fig:class_fid_guidance}
        \end{figure}
    \end{minipage}
    \hfill
    \begin{minipage}[b]{0.5\textwidth}
        \begin{table}[H]
            \centering
            \caption{State conditioning ablation for adaptive CFG on CIFAR-10. We compare policies conditioned only on the noise level $\sigma$, on the full sampler state $(x,\sigma)$, and on $(x,\sigma,c)$ with explicit class information. Bold indicates the best value within each divergence block.}
            \label{tab:ablation_conditioning}
            \vspace{0.2cm}
            \resizebox{\columnwidth}{!}{{\renewcommand{\arraystretch}{0.9}
\begin{tabular}{l|l|rrr}
  \toprule
  Divergence & Policy            & FID $\downarrow$ & Precision $\uparrow$ & Recall $\uparrow$ \\
  \midrule
  \multirow{3}{*}{KL}
             & $\pi(\sigma)$     & $2.26$           & $80.3 $              & $71.1 $           \\
             & $\pi(x,\sigma)$   & $\mathbf{2.02}$  & $79.8$               & $\mathbf{71.9}$   \\
             & $\pi(x,\sigma,c)$ & $2.12 $          & $\mathbf{81.1} $     & $70.6 $           \\
  \midrule
  \multirow{3}{*}{rKL}
             & $\pi(\sigma)$     & $2.25 $          & $80.3 $              & $\mathbf{71.1} $  \\
             & $\pi(x,\sigma)$   & $2.38$           & $\mathbf{81.2}$      & $70.6$            \\
             & $\pi(x,\sigma,c)$ & $\mathbf{2.13} $ & $81.1 $              & $70.4 $           \\
  \midrule
  EDM + CFG  & ---               & 2.07             & 80.6                 & 70.9              \\
  \bottomrule
\end{tabular}}
}
        \end{table}
    \end{minipage}
\end{figure}

\textbf{State conditioning}
We compare three conditioning variants for the adaptive CFG policy on CIFAR-10: $\pi(\sigma)$ (noise level only), $\pi(x, \sigma)$ (state-dependent), and $\pi(x, \sigma, c)$ (state-dependent with explicit class label $c$).
Table~\ref{tab:ablation_conditioning} reports results for both KL and rKL divergences.
Adding $x$ to the conditioning brings consistent FID improvements over $\pi(\sigma)$, confirming that per-sample adaptivity is essential and that noise-level alone is insufficient to determine the optimal guidance scale.
Crucially, $\pi(x, \sigma)$ achieves performance close to $\pi(x, \sigma, c)$, indicating that the policy implicitly recovers class-discriminative information from the image content $x$ alone without requiring explicit class supervision.
Figure~\ref{fig:class_fid_guidance} further shows that this gain is not driven by a single class: the state-dependent policy improves over the noise-level-only policy across most classes while remaining close to explicit class conditioning.
This suggests that the state representation carries enough structure for the policy to identify the modality of each sample and adapt guidance accordingly. Appendix~\ref{app:cfg_x_dependent} provides a qualitative trajectory comparison illustrating how state-dependent guidance changes the sampling path.

\textbf{Action space and policy backbone}
Action space design and backbone choice both significantly impact performance (Appendix~\ref{app:action_space}, \ref{app:adm_backbone}, Tables~\ref{tab:ablation_n_gamma},~\ref{tab:adm_or_not}).
Each strategy admits an optimal action space cardinality beyond which performance saturates, confirming that expressivity matters but can be kept moderate.
Using a pretrained ADM encoder \cite{dhariwal_diffusion_2021} as a frozen feature extractor is crucial: removing it causes a large FID drop ($2.05 \to 3.04$), as rich perceptual features greatly facilitate policy and density-ratio training.

\section{Conclusion}
We introduced an inference-time framework for adapting diffusion sampling by optimizing sampling policies via inverse reinforcement learning, without retraining the denoiser or learning explicit rewards.
By matching expert state occupancy measures through $f$-divergence minimization, the method provides an interpretable way to control sampling behavior. Experiments on CIFAR-10, FFHQ, and ImageNet indicate improved cost--quality trade-offs across multiple sampling strategies.
From a practical standpoint, the framework replaces costly hyperparameter grid searches with a single amortized training run.
On ImageNet-64, this reduces the computational budget from up to 1437 EFLOPs (3-hyperparameter grid search, $8^3$ trials) to 161 EFLOPs, while adding only 16\% overhead at inference time.
This makes systematic sampling optimization feasible even for large-scale models where per-trial evaluation costs are prohibitive.

\paragraph{Discussion and limitations.}
Our experiments suggest that learned samplers are most useful when the optimal intervention is state-dependent and difficult to summarize by a small hand-tuned schedule. This is visible for adaptive guidance and stochasticity injection, while the renoising results show that the benefit remains dataset- and strategy-dependent. The current framework also depends on discrete action-space design and can be sensitive to policy initialization. Future work should therefore study more robust initialization, continuous or adaptive action spaces, and extensions to stronger pretrained models and other generative processes.

\begin{ack}
  This work was granted access to the HPC resources of IDRIS under the allocations 2025-AD011016159R1, 2025-A0181016159 and 2025-AD011017138 made by GENCI. This project was partly funded by the French government's Agence Nationale de la Recherche under the "France 2030" program (ANR-23-IACL-0008, PRAIRIE-PSAI and ANR-22-PESN-0014).
\end{ack}

\bibliographystyle{plainnat}
\bibliography{references.bib}

\appendix

\section*{Appendix}
\addcontentsline{toc}{section}{Appendix}

\renewcommand{\thefigure}{\thesection.\arabic{figure}}
\renewcommand{\thetable}{\thesection.\arabic{table}}
\setcounter{section}{0}
\renewcommand{\thesection}{\Alph{section}}

\section{Proofs}
\label{sec:appendix_proofs}

\subsection{Gradient formula proof}
\begin{proof}
    \label{proof:thm_grad}
   Let's consider our objective function:
   \[\Lf = \Df( \mu_E \Vert  \mu_\theta)\]

    We can compute the gradient of $\Lf$ with respect to $\theta$ using the chain rule and the properties of the \fdiv:
    \begin{align*}
    \nabla_\theta \Lf 
    &= \nabla_\theta \Df( \mu_E \Vert  \mu_\theta) \\
    &= \nabla_\theta \int_{\mathcal{S}} f\left(\frac{ \mu_E(s)}{ \mu_\theta(s)}\right)  \mu_\theta(s)\, ds \\
    &= \int_\mathcal{S} \nabla_\theta \mu_\theta(s)\, f\left(\frac{ \mu_E(s)}{ \mu_\theta(s)}\right)
        -  \mu_\theta(s) \frac{\nabla_\theta \mu_\theta(s)}{ \mu_\theta(s)^2}  \mu_E(s) \dot{f}\left(\frac{ \mu_E(s)}{ \mu_\theta(s)}\right)\, ds \\
    &= \int_\mathcal{S} \nabla_\theta \mu_\theta(s)\, h_f\left(\frac{ \mu_E(s)}{ \mu_\theta(s)}\right)\, ds \\
    &=\int_\mathcal{S} \nabla_\theta \log\mu_\theta(s) h_f\left(\frac{ \mu_E(s)}{ \mu_\theta(s)}\right)\,  \mu_\theta(s)ds
    \end{align*}

    with $h_f(x) = f(x) - x\dot{f}(x)$.

    Now, using the property of the state occupancy measure, we can write:
    \begin{align*}
        \nabla_\theta \Lf&=\E_{\traj \sim p_\theta}\left[\frac{1}{T}\sum_{t'=1}^{T}\nabla_\theta\log\mu_{\theta}(s_{t'})h_f\left(\frac{ \mu_E(s_{t'})}{ \mu_\theta(s_{t'})}\right)\right]\\
        &=\E_{\traj \sim p_\theta}\left[\frac{1}{T}\sum_{t'=1}^{T}\left(\sum_{t=1}^{t'}\nabla_\theta\log\pi_{\theta}(a_t\vert s_t)\right)h_f\left(\frac{ \mu_E(s_{t'})}{ \mu_\theta(s_{t'})}\right)\right] \\
        &=\E_{\traj \sim p_\theta}\left[\frac{1}{T}\sum_{t=1}^{T}\nabla_\theta\log\pi_{\theta}(a_t\vert s_t)\sum_{t'\geq t}h_f\left(\frac{ \mu_E(s_{t'})}{ \mu_\theta(s_{t'})}\right)\right]
    \end{align*}
    by reindexing the sums.
    \end{proof}

\subsection{Adding importance sampling}
\begin{proof}
    \label{proof:cor_is_gradient_inf}
    Adding importance sampling to the gradient formula derived in Theorem \ref{thm:grad_inf} gives:
    \begin{align*}
        \nabla_\theta \Lf &= \E_{\traj \sim p_{\theta_0}}\left[\frac{p_{\theta}(\traj)}{p_{\theta_0}(\traj)}\frac{1}{T}\sum_{t=1}^{T}\nabla_\theta\log\pi_{\theta}(a_t\vert s_t)\sum_{t'\geq t}h_f\left(\frac{ \mu_E(s_{t'})}{ \mu_\theta(s_{t'})}\right)\right]\\
    \end{align*}
    where $\theta_0$ is the parameter of the policy used to sample the trajectories.    
\end{proof}

\section{Experimental Details}
\label{app:experimental_details}

We parameterize the sampling policy and the state density estimator using a shared neural architecture, implemented as a dual-head network based on an EDM-style Encoder--UNet. The resulting model enables joint representation learning while keeping the policy and density objectives decoupled.

\textbf{Backbone and Preconditioning}
Given a noisy state $x$ and noise level $\sigma$, we apply the standard EDM preconditioning and map
$\sigma$ to the corresponding normalized sigma variable $\tau$. The network is
conditioned on $\tau$ and, when applicable, on class labels for conditional generation. For
non-stationary policies, the discrete trajectory step index is additionally provided as an explicit
conditioning variable. We also use a pretrained ADM encoder as
a frozen feature extractor. The input image is first processed by the ADM encoder, and
the resulting latent representation is used as input to the policy and density heads. All ADM
parameters are kept fixed throughout training.

\textbf{Policy Head}
The policy head outputs logits over a discrete action space of size $A$, where each action
corresponds to a sampling-time control operation (e.g., continuing denoising or restarting to a
previous noise level). The policy head is implemented as convolutional layers with output channels equal to
$A$, sharing the same architectural design and conditioning mechanisms as the density head. For
stationary policies, the policy is conditioned only on the current state and noise level.
For non-stationary policies, the trajectory step index is included as an additional conditioning
signal. During training and inference, actions are sampled from the resulting categorical
distribution.

\textbf{Density Head}
The density head estimates a scalar quantity associated with the state distribution, used for
density-ratio-based training objectives. It is implemented as convolutional layers with a single scalar
output. The density head shares the same backbone structure and conditioning as the policy head but
has independent parameters in the output layers. The output layers of the density head are
initialized using Xavier initialization to stabilize early training.

\subsection{Training Details}
\label{app:training_details}

\textbf{Initialization}
The initialization of both the policy and density networks plays an important role in practice. Since the learning problem is
non-convex, we do not guarantee convergence to an unique policy, and different initializations may lead to different local
optima. We initialize the policy network using a task-specific heuristic corresponding to a reasonable sampling strategy
(for instance favoring continuation of denoising at high noise levels and allowing restarts at later stages). This initialization
provides a stable starting point and significantly improves training stability compared to random initialization.

The density network is initialized using expert trajectories and rollouts generated by the initialized policy. These expert
samples are used to initialize the density-ratio estimator before joint training with the policy, which helps stabilize the
early stages of optimization.

\textbf{Optimization}
For both the policy and density networks, we apply an exponential moving average (EMA) to the network parameters during
training. EMA is used for evaluation and inference, as it consistently yields more stable behavior and improved performance.
Finally, when training the density estimator, we apply label smoothing at high noise levels. At large noise scales, the
input states contain limited information, and hard labels can lead to unstable gradients. Label smoothing in this regime
improves robustness and reduces variance during training.

We provide details of the complete training procedure in Table~\ref{tab:training_config}.

\subsection{Density Ratio Estimation}
\label{app:density_ratio}

\textbf{Discriminator-based density ratio estimation}
Estimating density ratios in continuous spaces is a classical problem, with
approaches ranging from kernel density estimation to direct ratio fitting and
probabilistic modeling~\cite{sugiyama_density_2010}. In this work, we adopt a
classifier-based approach, which has been shown to scale effectively to
high-dimensional settings~\citep{ho_generative_2016, finn_connection_2016,song_bridging_2020, roth_stabilizing_2017,azadi_discriminator_2019} and has successfully been applied to diffusion models~\citep{kim_refining_2023,verine_improving_2025}.

We train a binary classifier $D_\varphi(x,\sigma)$ to discriminate between states sampled from
the expert and from the policy, conditioning on the noise level $\sigma$. Since
the expert provides only marginal distributions at each noise level, we sample
$\sigma$ uniformly and draw states independently from $p_E(\cdot\mid\sigma)$ and
$p_\theta(\cdot\mid\sigma)$. The classifier is trained by minimizing the binary
cross-entropy loss
\begin{equation*}
    \mathcal{L}_D(\varphi) = \E_{\sigma \sim \mathbb{U}[\Sigma_0,\ldots,\Sigma_N]} \Big[ - \E_{x \sim p_E(\cdot \mid \sigma)}\big[ \log D_\varphi(x,\sigma) \big] - \E_{x \sim p_\theta(\cdot \mid \sigma)}\big[ \log (1 - D_\varphi(x,\sigma)) \big] \Big].
\end{equation*}
Assuming an optimal discriminator, the density ratio is
\begin{equation}
    \frac{p_E(x \mid \sigma)}{p_\theta(x \mid \sigma)}
    =
    \frac{D_\varphi(x,\sigma)}{1 - D_\varphi(x,\sigma)}
    =
    \exp\!\left(\tilde{D}_\varphi(x,\sigma)\right),
\end{equation}
where $\tilde{D}_\varphi(x,\sigma)$ denotes the logit output of the classifier.

\begin{algorithm}[h!]
    \caption{Policy Optimization via State-Marginal f-Divergence Minimization}
    \label{alg:importance_sampling}
    \begin{algorithmic}[1]
        \REQUIRE Pretrained denoiser $F$, policy $\pi_\theta$, discriminator $D_\varphi$, replay buffer $B$, number of epochs $n_{\text{epoch}}$, batch size $N_{\text{batch}}$, expert state samples $\mathcal{S}_E$, clipping parameter $\epsilon$, learning rate $\eta$, and policy update frequency $K$.
        \STATE Initialize policy $\pi_\theta$ and discriminator $D_\varphi$.
        \FOR{$k = 1$ \textbf{to} $n_{\text{epoch}}$}
        \IF{$k \bmod K = 0$}
        \STATE Generate trajectories
        \[
            \traj = \{(s_t, a_t)\}_{t=1}^T
        \]
        using policy $\pi_\theta$ and denoiser $F$.

        \STATE Train discriminator $D_\varphi$ to distinguish expert states $\mathcal{S}_E$ from policy-generated states $\{s_t\}$.

        \STATE Estimate density ratios using
        \[
            \frac{\mu_E(x,\sigma)}{\mu_\theta(x,\sigma)} \;\approx\;
            \frac{D_\varphi(x,\sigma)}{1 - D_\varphi(x,\sigma)}\times \frac{w_E(\sigma)}{w_\theta(\sigma)}.
        \]

        \STATE Compute advantage estimates
        \[
            A_t \;=\; \frac{1}{T}\sum_{t' = t}^{T}
            h_f\!\left(\frac{\mu_E(s_{t'})}{\mu_\theta(s_{t'})}\right),
            \quad
            h_f(x) = f(x) - x f'(x),
        \]
        and store $(s_t, a_t, A_t)$ in buffer $B$.

        \STATE Save current policy parameters: $\theta_0 \gets \theta$.
        \ENDIF

        \FOR{each minibatch $\{(s_i,a_i,A_i)\}_{i=1}^{N_{\text{batch}}}$ sampled from $B$}
        \STATE Compute PPO-style objective:
        \[
            \hat{\mathcal{L}}_f(\theta)
            =
            \frac{1}{N_{\text{batch}}}
            \sum_{i=1}^{N_{\text{batch}}}
            \max\!\left(
            r_i(\theta)\,(A_i - \bar{A}),
            \,
            \clip{r_i(\theta)}{1-\epsilon}{1+\epsilon}\,(A_i - \bar{A})
            \right),
        \]
        where
        \[
            r_i(\theta) = \frac{\pi_\theta(a_i \mid s_i)}{\pi_{\theta_0}(a_i \mid s_i)},
            \qquad
            \bar{A} = \frac{1}{N_{\text{batch}}}\sum_{i=1}^{N_{\text{batch}}} A_i .
        \]

        \STATE Update policy parameters:
        \[
            \theta \gets \theta - \eta \nabla_\theta \hat{\mathcal{L}}_f(\theta).
        \]
        \ENDFOR
        \ENDFOR
        \ENSURE Trained sampling policy $\pi_\theta$.
    \end{algorithmic}
\end{algorithm}

\textbf{Computing the occupancy measure ratio}
Because expert trajectories are unavailable, the expert is specified only through
state marginals at each noise level. As a result, the discriminator estimates a
density ratio under a uniform distribution over noise levels rather than under the
policy-induced state occupancies. In addition, while the policy accumulates
occupancy mass at the terminal noise level, the expert does not provide temporal
occupancy information.

We compute the full occupancy measure ratio by using importance sampling corrections to
account for the mismatch between sampling and target proportions.
\begin{equation}
    \frac{\mu_E(x, \sigma)}{\mu_\theta(x, \sigma)}
    =
    \frac{w_E(\sigma)}{w_\theta(\sigma)}
    \times
    \frac{p_E(x \vert \sigma)}{p_\theta(x \vert \sigma)}.
\end{equation}
As previously discussed, the term $w_E$ is a design choice while $w_\theta$ is induced by the policy and is estimated empirically from the simulated trajectories.

\begin{table}[h!]
    \centering
    \caption{Parameters used for training the sampling policies across different datasets and strategies.}
    \label{tab:training_config}
    \small
\begin{tabular}{l|ccc|ccc|cc}
\toprule
Parameters
 & \multicolumn{3}{c|}{Renoise}
 & \multicolumn{3}{c|}{Gammas}
 & \multicolumn{2}{c}{Guidance} \\
 & \rotatebox{90}{CIFAR-10}
 & \rotatebox{90}{FFHQ 64$\times$64}
 & \rotatebox{90}{ImageNet 64$\times$64}
 & \rotatebox{90}{CIFAR-10}
 & \rotatebox{90}{FFHQ 64$\times$64}
 & \rotatebox{90}{ImageNet 64$\times$64}
 & \rotatebox{90}{CIFAR-10}
 & \rotatebox{90}{FFHQ 64$\times$64} \\
\midrule
Cond. model     & $\times$ & $\times$ & \checkmark & $\times$ & $\times$ & \checkmark & \checkmark & \checkmark \\
LR              & 1e-4 & 1e-5 & 1e-5 & 1e-4 & 1e-5 & 1e-5 & 1e-4 & 1e-5 \\
\# traj.        & 1024 & 1536 & 2048 & 1024 & 1536 & 2048 & 1024 & 1536 \\
\# DRE Init.    & 100 & 200 & 200 & 100 & 200 & 200 & 100 & 200 \\
Action dim.     & 4 & 4 & 4 & 10 & 10 & 20 & 20 & 20 \\
Res.            & 32$\times$32 & 64$\times$64 & 64$\times$64 & 32$\times$32 & 64$\times$64 & 64$\times$64 & 32$\times$32 & 64$\times$64 \\
\# params.      & 5.8M & 5.8M & 6.8M & 5.8M & 5.8M & 6.8M & 5.8M & 5.8M \\
\# Noise levels & 9 & 20 & 20 & 18 & 40 & 256 & 18 & 40 \\
GPU hrs         & 30 & 48 & 80 & 60 & 80 & 160 & 100 & 150 \\
GPU             & V100 & V100 & H100 & V100 & V100 & H100 & V100 & V100 \\
\bottomrule
\end{tabular}
\end{table}

\subsection{Results with Variability}
\label{app:results_std}

The main paper reports point estimates in Table~\ref{tab:all_results} to keep the comparison readable. For CIFAR-10 and FFHQ, where repeated evaluations were run for several learned policies, Tables~\ref{tab:all_results_std} and~\ref{tab:ablation_conditioning_std} provide mean$\pm$standard deviation values when available; remaining entries are single point estimates.

\begin{table}[h!]
    \centering
    \caption{Results across all strategies with mean$\pm$standard deviation when repeated runs are available. Variability is reported for repeated CIFAR-10 and FFHQ learned-policy evaluations; ImageNet results are single-run evaluations due to the cost of 50k-sample metrics at that scale.}
    \label{tab:all_results_std}
    \resizebox{0.85\textwidth}{!}{{\renewcommand{\arraystretch}{0.85}
{\renewcommand{\arraystretch}{0.85}
  \begin{tabular}{l|l|rrr|r}
    \toprule
    Strategy                                 & Method & FID $\downarrow$       & Prec $\uparrow$       & Rec $\uparrow$        & NFE  \\
    \midrule
    \multicolumn{6}{l}{\textbf{CIFAR-10 $32\times32$}}                                                                                \\
    \midrule
    \multirow{3}{*}{CFG}                     & EDM+CFG & 2.07                  & 80.6                  & 70.9                  & 35   \\
                                             & KL     & $\mathbf{2.02}\pm0.15$ & $79.8\pm0.9$          & $\mathbf{71.9}\pm1.1$ & 35   \\
                                             & rKL    & $2.38\pm0.14$          & $\mathbf{81.2}\pm0.9$ & $70.6\pm0.6$          & 35   \\
    \midrule
    \multirow{3}{*}{$\gamma_{\mathrm{EDM}}$} & EDM    & \textbf{1.98}          & 78.7                  & 72.9                  & 35   \\
                                             & KL     & $2.01\pm0.03$          & $78.8\pm0.2$          & $\mathbf{73.1}\pm0.3$ & 35   \\
                                             & rKL    & $2.04\pm0.02$          & $\mathbf{79.4}\pm0.3$ & $72.3\pm0.4$          & 35   \\
    \midrule
    \multirow{3}{*}{Renoise}                 & EDM    & 3.42                   & 76.7                  & \textbf{72.5}         & 17   \\
                                             & KL     & \textbf{3.18}          & 77.3                  & \textbf{72.5}         & 17.7 \\
                                             & rKL    & 3.21                   & \textbf{78.1}         & 71.6                  & 30.5 \\
    \midrule
    \midrule

    \multicolumn{6}{l}{\textbf{FFHQ} $64\times64$}                                                                                    \\
    \midrule
    \multirow{3}{*}{CFG}                     & EDM+CFG & 3.11                  & 90.7                  & 87.8                  & 79   \\
                                             & KL     & $\mathbf{3.03}\pm0.02$ & $90.6\pm0.2$          & $\mathbf{88.6}\pm0.1$ & 79   \\
                                             & rKL    & $3.04\pm0.01$          & $\mathbf{90.8}\pm0.1$ & $88.3\pm0.1$          & 79   \\
    \midrule
    \multirow{3}{*}{$\gamma_{\mathrm{EDM}}$} & EDM    & 2.56                   & 90.1                  & \textbf{89.4}         & 79   \\
                                             & KL     & $2.53\pm0.01$          & $90.4\pm0.1$          & $88.2\pm0.2$          & 79   \\
                                             & rKL    & $\mathbf{2.49}\pm0.02$ & $\mathbf{90.9}\pm0.1$ & $87.9\pm0.1$          & 79   \\
    \midrule
    \multirow{3}{*}{Renoise}                 & EDM    & \textbf{2.60}          & 90.5                  & \textbf{89.0}         & 59   \\
                                             & KL     & 3.04                   & \textbf{90.9}         & 88.1                  & 87.4 \\
                                             & rKL    & 3.07                   & \textbf{90.9}         & 87.9                  & 68.2 \\
    \midrule
    \midrule

    \multicolumn{6}{l}{\textbf{ImageNet} $64\times64$}                                                                                \\
    \midrule
    \multirow{3}{*}{$\gamma_{\mathrm{EDM}}$} & EDM    & \textbf{2.12}          & 85.9                  & 91.7                  & 511  \\
                                             & KL     & 2.14                   & 85.7                  & \textbf{92.0}         & 511  \\
                                             & rKL    & 2.24                   & \textbf{86.3}         & 91.1                  & 511  \\
    \midrule
    \multirow{3}{*}{Renoise}                 & EDM    & 3.01                   & \textbf{85.8}         & 91.8                  & 350  \\
                                             & KL     & 2.96                   & 85.4                  & \textbf{92.3}         & 51.8 \\
                                             & rKL    & \textbf{2.92}          & 85.4                  & 92.1                  & 52.1 \\
    \bottomrule
  \end{tabular}}}
}
\end{table}

The repeated runs support the qualitative conclusions of the main table. On CIFAR-10, the standard deviations of FID are small relative to the gaps between the main trends: adaptive CFG with KL remains competitive with fixed guidance while improving Recall, and rKL consistently shifts the trade-off toward Precision. For stochasticity injection, the gap to deterministic EDM is within a few hundredths of FID, but KL and rKL still produce the expected Recall/Precision shifts. For FFHQ, the reported variability is very small for CFG and $\gamma_{\mathrm{EDM}}$, suggesting that the observed gains are not driven by evaluation noise.

\begin{table}[h!]
    \centering
    \caption{State conditioning ablation for adaptive CFG on CIFAR-10 with mean$\pm$standard deviation over repeated runs.}
    \label{tab:ablation_conditioning_std}
    \resizebox{0.75\textwidth}{!}{{\renewcommand{\arraystretch}{0.9}
\begin{tabular}{l|l|rrr}
\toprule
Divergence & Policy & FID $\downarrow$ & Precision $\uparrow$ & Recall $\uparrow$ \\
\midrule
\multirow{3}{*}{KL}
  & $\pi(\sigma)$       & $2.26 \pm 0.01$ & $80.3 \pm 0.1$ & $71.1 \pm 0.2$ \\
  & $\pi(x,\sigma)$     & $\mathbf{2.02}\pm0.15$ & $79.8\pm0.9$ & $\mathbf{71.9}\pm1.1$ \\
  & $\pi(x,\sigma,c)$   & $2.12 \pm 0.05$ & $\mathbf{81.1} \pm 0.1$ & $70.6 \pm 0.1$\\
\midrule
\multirow{3}{*}{rKL}
  & $\pi(\sigma)$       & $2.25 \pm 0.01$ & $80.3 \pm 0.1$ & $\mathbf{71.1} \pm 0.2$          \\
  & $\pi(x,\sigma)$     & $2.38\pm0.14$ & $\mathbf{81.2}\pm0.9$ & $70.6\pm0.6$ \\
  & $\pi(x,\sigma,c)$   & $\mathbf{2.13} \pm 0.01$ & $81.1 \pm 0.3$ & $70.4 \pm 0.2$ \\
\midrule
EDM + CFG  & ---             & 2.07          & 80.6          & 70.9          \\
\bottomrule
\end{tabular}}
}
\end{table}

Table~\ref{tab:ablation_conditioning_std} confirms that the benefit of state conditioning is robust. For KL, conditioning on $(x,\sigma)$ improves FID over the $\sigma$-only policy and increases Recall, while the standard deviation remains modest. For rKL, explicit class conditioning gives the best FID, but $\pi(x,\sigma)$ still achieves the highest Precision and remains close to $\pi(x,\sigma,c)$, supporting the claim that the noisy image state already carries substantial class-discriminative information.

\subsection{Learned CFG Guidance Profiles}
\label{app:cfg_profiles}

\begin{figure}[h!]
    \centering
    \includegraphics[width=0.7\textwidth]{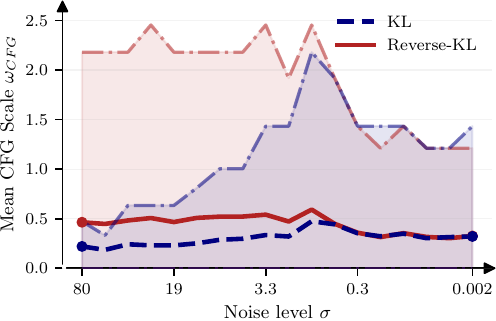}
    \caption{Learned guidance profiles $\omega(x,\sigma)$ on CIFAR-10 for KL and rKL (mean and $[0.25,0.75]$ quantiles). KL applies higher guidance to a larger fraction of samples, improving Recall; rKL concentrates guidance on fewer samples, improving Precision.}
\end{figure}

The CIFAR-10 profile mirrors the behavior observed on FFHQ (Figure~\ref{fig:cfg_profiles_ffhq}): the policy increases the proportion of highly guided samples at medium-to-low noise levels, coinciding with the speciation phase of sampling \citep{biroli_dynamical_2024}. Around medium noise levels, guidance strength exhibits higher mean and variance, indicating a greater need for state-dependent control during the phase when topological structures emerge in the images.

\subsection{Is it necessary to have $x$-dependent policies?}
\label{app:cfg_x_dependent}

\begin{table*}[h!]
    \centering
    \caption{Comparison of $x$-dependent and noise-level-dependent policies for adaptive stochasticity injection on CIFAR-10 using KL and Reverse-KL divergence minimization. Unlike adaptive CFG, $x$-dependence does not consistently improve this strategy, suggesting that the value of state-dependent control depends on the intervention being learned.}
    \label{tab:x_dep_gammas_cifar}
\begin{tabular}{lrrrc}
\toprule
Method & FID $\downarrow$ & Precision $\uparrow$ & Recall $\uparrow$  & $x$-dependent \\
\midrule

 & 1.99 & 78.5 & \textbf{73.2} & $\times$ \\
KL& 2.03& 78.6 &	72.7 & \checkmark \\
& 2.06& 79.0 & 72.4 & \checkmark \\
\midrule
& 2.03 & \textbf{79.7} & 71.9 & $\times$ \\
rKL & 2.03&	78.6&	72.4 & \checkmark \\
&2.06 &	79.2 &	72.5 & \checkmark \\
\midrule
EDM & \textbf{1.98} & 78.7 & 72.9 & $\times$ \\
\bottomrule
\end{tabular}

\end{table*}

In this appendix, we analyze whether conditioning the sampling policy on the image state $x$ is necessary when learning adaptive classifier-free guidance (CFG). In our framework, this corresponds to comparing policies of the form $\pi_\theta(\omega \mid x, \sigma)$ against simpler policies that depend only on the noise level, $\pi_\theta(\omega \mid \sigma)$.

A noise-level–dependent policy learns a single guidance scale per noise level, which can be interpreted as a global trade-off between fidelity and diversity at each stage of the sampling process. While this is sufficient to recover standard hand-tuned guidance schedules, it cannot adapt guidance strength across samples. In contrast, an $x$-dependent policy can modulate the guidance scale based on the current state of the sample, allowing different images to receive different amounts of guidance at the same noise level.

Empirically, we observe that $x$-dependent policies consistently outperform $\sigma$-only policies in terms of sample quality. As reported in Table~\ref{tab:ablation_conditioning}, conditioning on $(x,\sigma)$ improves FID, Precision, and Recall across datasets and divergences. Qualitatively, the learned policies exhibit sparse behavior: at a given noise level, only a subset of samples receives strong guidance, while the majority remain weakly guided. This behavior cannot be captured by a single scalar guidance value shared across all samples.

This observation aligns with recent analyses of CFG, which suggest that excessive guidance can lead to mode collapse and loss of diversity, while insufficient guidance degrades fidelity. An $x$-dependent policy enables the sampler to selectively apply strong guidance where it is most beneficial, while preserving diversity elsewhere. Figure~\ref{fig:cfg_x_dependent} illustrates this difference at the policy-profile level: the $x$-dependent policy exhibits larger dispersion across samples, whereas the $\sigma$-only policy is forced to apply a single global schedule. In practice, this adaptivity appears crucial for achieving favorable precision--recall trade-offs without manual tuning.
\begin{figure}[h!]
    \centering
    \includegraphics[width=0.49\textwidth]{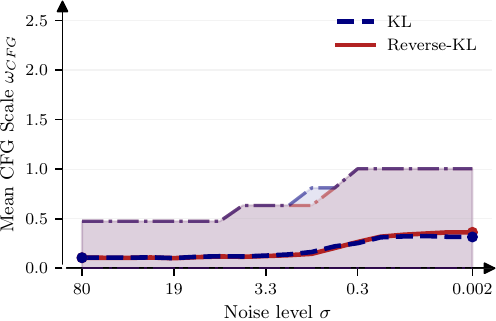}
    \includegraphics[width=0.49\textwidth]{mean_cfg_cifar.pdf}
    \caption{Comparison of $x$-dependent and noise-level–dependent policies for adaptive CFG on CIFAR-10 using KL and Reverse-KL divergence minimization. The $x$-dependent policy achieves better FID, Precision, and Recall by selectively applying guidance based on the current sample state.
        The left profile corresponds to a policy depending only on noise level $\sigma$, while the right profile uses both $x$ and $\sigma$. It is clear that the policy independent of $x$ produces suboptimal results.}
    \label{fig:cfg_x_dependent}
\end{figure}

Figure~\ref{fig:viz_cfg} provides a complementary qualitative view by comparing trajectories initialized from the same noise. The fixed-guidance sampler follows a single hand-tuned schedule, while the learned state-dependent policy applies stronger guidance only at selected medium-to-low noise levels. This changes the path through the sampler state space and can steer samples toward different image content, illustrating why per-sample adaptivity can affect quality even when the same denoiser and number of solver steps are used.
\begin{figure}[h!]
    \centering
    \includegraphics[width=0.99\textwidth]{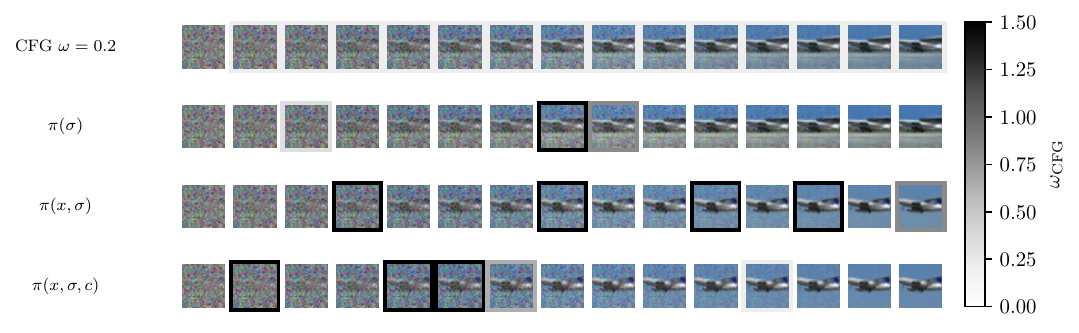}
    \caption{Example trajectory starting from the same initial noise $x$ for fixed CFG ($\omega=0.2$) and learned KL-based adaptive CFG on CIFAR-10. State-dependent policies apply high guidance at medium-to-low noise levels, steering the trajectory toward different image content.}
    \label{fig:viz_cfg}
\end{figure}

For the Gamma strategy, the comparison in Table~\ref{tab:x_dep_gammas_cifar} is more mixed: conditioning on $x$ does not consistently improve over a noise-level-only policy. This suggests that state-dependent control is especially important for guidance, while stochasticity injection can sometimes be captured by a simpler noise-level schedule.

\subsection{Impact of the Choice of $f$-Divergence for CFG}
\label{app:cfg_fdivs}

Table~\ref{tab:cfg_fdivs_cifar10} reports CFG results on CIFAR-10 ($32\times32$) for additional $f$-divergences.
All choices yield FID scores on par with KL and rKL, confirming that the framework is not sensitive to this choice.
However, their qualitative behavior differs: Total Variation, Hellinger, and Jensen--Shannon are symmetric divergences and do not exhibit the same tendency as KL or rKL to concentrate guidance aggressively in specific regions of the noise space.
The $\chi^2$ divergence, sometimes favored for its smoothness in optimization, did not produce any noticeable improvement in learning curves compared to KL.

\begin{table}[h]
    \centering
    \begin{tabular}{lccc}
    \toprule
    $f$-divergence & FID $\downarrow$ & Precision $\uparrow$ & Recall $\uparrow$ \\
    \midrule
    Total Variation  & 2.19 & 80.7 & 70.6 \\
    Hellinger        & 2.20 & 80.7 & 70.7 \\
    Jensen--Shannon  & 2.21 & 80.6 & 70.7 \\
    $\chi^2$         & 2.21 & 80.6 & 70.8 \\
    \midrule
    KL               & \textbf{2.05} & 80.7 & \textbf{71.4} \\
    rKL              & 2.53 & \textbf{82.4} & 69.8 \\
    \bottomrule
\end{tabular}

    \caption{Adaptive CFG on CIFAR-10 ($32\times32$) for different $f$-divergences. KL and rKL from the main paper are shown for reference (separated by a rule).}
    \label{tab:cfg_fdivs_cifar10}
\end{table}

\subsection{Learned $\gamma_{\mathrm{EDM}}$ Profiles}
\label{app:gamma_profiles}

\begin{figure}[h!]
    \centering
    \includegraphics[width=0.6\textwidth]{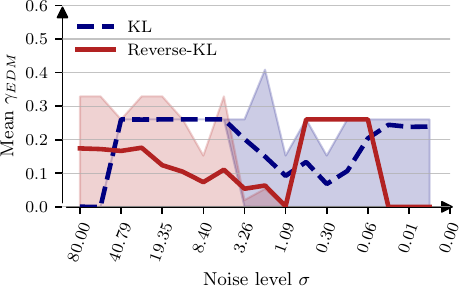}
    \caption{Learned $\gamma_{\mathrm{EDM}}(x,\sigma)$ profiles on CIFAR-10 for KL and rKL divergences (mean and $[0.25,0.75]$ quantiles). The profiles lack simple structure, confirming that optimal stochasticity injection is highly state-dependent.}
    \label{fig:gamma_profiles}
\end{figure}

On CIFAR-10, rKL adds less noise with low variance at low-noise regions to increase precision, while KL does the opposite, adding more noise with higher variance to increase diversity. This behavior is consistent with the mode-seeking/mode-covering interpretation of the divergences.

\subsection{Temperature}
\label{app:temp_gammas}
\begin{figure}[h!]
    \centering
    \resizebox*{\textwidth}{!}{
        \includegraphics[width=0.6\textwidth]{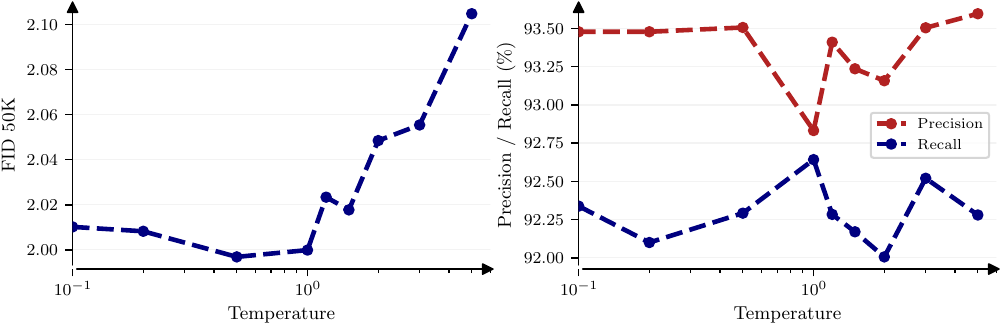}
    }
    \caption{Effect of temperature scaling on the learned Gamma sampling policy for CIFAR-10 using KL divergence minimization.}
    \label{fig:temp_gammas_cifar}
\end{figure}

We experimented with adjusting the temperature of the policy during evaluation to study its effect on the trade-off between sample quality and diversity.
Lowering the temperature sharpens the action distribution, leading to more deterministic behavior, while increasing it encourages exploration by flattening the policy over noise-injection actions.
As discussed in Section~\ref{sec:results}, temperature therefore provides an inference-time mechanism to control the exploration--exploitation balance of the learned policy.

Figure~\ref{fig:temp_gammas_cifar} shows that increasing temperature leads to higher Recall but also higher FID.
For the Gamma strategy, this behavior can be attributed to more frequent stochasticity injection at higher noise levels, which increases sample diversity but also perturbs trajectories that would otherwise converge to high-fidelity modes.
As a result, diversity gains come at the cost of reduced sample sharpness.

Notably, we observe that Precision and Recall are jointly optimized near the temperature used during training ($\beta = 1$).
This suggests that the learned policy is already calibrated to balance exploration and exploitation under the training objective, and that deviating significantly from this temperature primarily trades off fidelity for diversity rather than uncovering strictly better operating points.
\subsection{Controlling the Distribution of Generated Samples}
We also considered settings where the output distribution of a pretrained model is biased and must be corrected at inference time.
We focused on renoising strategies. On CIFAR-10, although the dataset is balanced, pretrained EDM models overproduce animal classes relative to vehicle classes. We defined a target class distribution
that increases the mass of vehicle classes while preserving normalization. Figure~\ref{fig:class_control_restart} shows that the learned renoise policy shifts the generated class distribution toward the target. While the match is not exact,
the bias is substantially reduced, indicating that the policy captures the intended correction direction.
\begin{figure}[h!]
    \centering
    \includegraphics[width=0.47\textwidth]{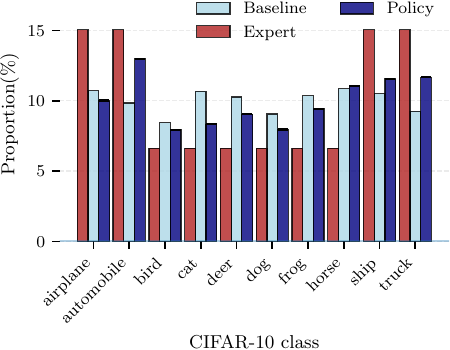}
    \caption{Class distribution of samples generated by a pretrained EDM model on CIFAR-10, before and after applying a learned renoise policy to correct class imbalance. The target distribution increases the mass of vehicle classes (automobile, truck) relative to animal classes. The learned policy shifts the generated distribution closer to the target, reducing bias while maintaining sample quality.}
    \label{fig:class_control_restart}
\end{figure}

A key modeling choice is whether the control policy depends explicitly on the trajectory step of the induced MDP. We distinguish between non-stationary
policies $\pi_\theta(x, \sigma, t)$, which depend on the step index, and stationary policies $\pi_\theta(x, \sigma)$, which do not. Although the optimal policy for a finite sampling horizon is generally non-stationary, we find that stationary policies are significantly easier to train and often
perform better in practice. Across all strategies, stationary policies consistently match or outperform non-stationary ones, yielding more stable optimization and
better trade-offs between distributional control and sample quality at comparable NFE.

\subsection{Dimension of the Action Space}
\label{app:action_space}

The action space $\mathcal{A}$ consists of a discrete set of candidate values for the controlled hyperparameter (e.g., guidance scales or noise injection levels). Table~\ref{tab:ablation_n_gamma} reports the effect of the cardinality $|\mathcal{A}|$ on the stochasticity injection experiment on CIFAR-10.

A small action space ($|\mathcal{A}|=5$) is insufficient to capture the range of useful noise injection levels and yields noticeably worse FID. Performance saturates quickly: $|\mathcal{A}|=20$ already matches $|\mathcal{A}|=100$ across all metrics, indicating that a moderate number of actions is sufficient for the policy to express the necessary behavior.
\begin{table}[h]
    \centering
    \caption{Effect of action space cardinality $|\mathcal{A}|$ on the stochasticity injection strategy on CIFAR-10.}
    
\begin{tabular}{lrrr}
\toprule
$\text{card}(A)$ & FID $\downarrow$ & Precision $\uparrow$ & Recall $\uparrow$\\
\midrule
5 & 2.20 &	79.8 &	72.4 \\
20 & 2.06 &	78.8 &	72.5 \\
100  & 2.07	&78.7 &	72.8 \\
\bottomrule
\end{tabular}

    \label{tab:ablation_n_gamma}
\end{table}

\subsection{Role of the ADM Encoder in the Backbone}
\label{app:adm_backbone}

Our policy network uses a pretrained ADM encoder as a frozen feature extractor to condition the policy on the current sample $x$. Table~\ref{tab:adm_or_not} ablates this design choice on the CFG experiment on CIFAR-10.

Removing the ADM encoder leads to a significant drop in performance: FID degrades from 2.05 to 3.04, Recall drops by nearly 4 points, while Precision increases, suggesting the policy collapses toward high-fidelity but less diverse modes. This confirms that rich, pretrained visual features are essential for the policy to accurately distinguish sample quality across different states and noise levels. Using the ADM encoder as a frozen backbone is therefore a key component of the architecture.

\begin{table}[h]
    \centering
    
\begin{tabular}{lrrr}
\toprule
ADM & FID $\downarrow$ & Precision $\uparrow$ & Recall $\uparrow$\\
\midrule
\xmark & 3.04 &	82.6 &	67.5	\\
\cmark & 2.05 & 80.7 & 71.4 \\
\bottomrule
\end{tabular}

    \caption{Impact of the ADM encoder on the CFG experiment on CIFAR-10. \cmark: ADM encoder used; \xmark: policy conditioned on raw $x$ only.}
    \label{tab:adm_or_not}
\end{table}

\section{Broader Impact}
\label{app:broader_impact}

This work proposes a framework for learning sampling policies for diffusion models via inverse reinforcement learning, with the goal of improving image quality and controllability without retraining the underlying generative model. Our method introduces a learned policy and a discriminator trained jointly with it, which adds a modest computational overhead at training time. We believe this cost is justified by the elimination of manual hyperparameter search, and we have documented it empirically in Table~\ref{tab:compute_cost}. We do not foresee direct societal harms specific to this work beyond those already associated with diffusion-based image generation more broadly.

\end{document}